\documentclass[11pt]{article}
\usepackage[T1]{fontenc}
\usepackage[utf8]{inputenc}
\usepackage{lmodern}
\usepackage[margin=1in]{geometry}
\usepackage{amsmath,amssymb,amsthm}
\usepackage{graphicx,booktabs,tabularx,array}
\usepackage{enumitem}
\usepackage[round,authoryear]{natbib}
\usepackage{xcolor}
\definecolor{linkblue}{RGB}{25,72,103}
\usepackage[colorlinks=true,allcolors=linkblue]{hyperref}
\usepackage{microtype}
\usepackage{fancyhdr}
\setlist[itemize]{topsep=3pt,itemsep=2pt}
\setlist[enumerate]{topsep=3pt,itemsep=3pt}
\newtheorem{proposition}{Proposition}
\newtheorem{assumption}{Assumption}
\newcommand{\E}{\mathbb{E}}
\newcommand{\Var}{\operatorname{Var}}
\newcommand{\Cov}{\operatorname{Cov}}
\newcommand{\tr}{\operatorname{tr}}

\newcommand{\clip}{\operatorname{clip}}
\newcommand{\credo}{\textsc{Credo}}
\title{\textbf{CREDO: Variance-Guided Rubric Evolution\\
for Replay-Corrected Credit Assignment}}
\author{Xuchun Hu}
\hypersetup{
  pdfauthor={Xuchun Hu},
  pdftitle={CREDO: Variance-Guided Rubric Evolution for Replay-Corrected Credit Assignment},
  pdfsubject={Preliminary method, theoretical analysis, and exact finite-model validation}
}
\date{September 2026}

\begin{document}
\maketitle
\begin{center}
\small\textit{Method and theoretical analysis. Full language-agent comparisons are pending.}
\end{center}
\begin{abstract}
Long-horizon language agents receive sparse terminal feedback, while intermediate
rubrics provide structured but potentially misspecified assessments of progress.
In resettable training environments, counterfactual continuation rollouts can
measure local credit, but exhaustive replay is costly. We propose \credo{}, a
framework that couples evolving semantic rubrics with selective, execution-based
credit correction. A frozen judge maps visible transitions to rubric features,
and a credit head predicts the change in expected terminal reward associated
with the realized transition. Independently sampled two-sided replays correct
prediction residuals using their recorded inclusion probabilities. We derive
conditional unbiasedness and a variance decomposition that connects two design
choices: which rubric features to retain, and where to allocate a fixed expected
replay budget. The resulting criterion weights prediction errors by policy-score
sensitivity and missing replay coverage; its allocation rule additionally
accounts for continuation cost. We also describe a practical mixture with
terminal leave-one-out advantages and distinguish its clipped, token-normalized
PPO implementation from the ideal policy-gradient estimator. This preliminary
report provides the method, proofs, an exact finite-model audit, and a controlled
evaluation protocol. It makes no claim of empirical superiority on language-agent
benchmarks.
\end{abstract}

\section{Introduction}
An agent can issue a valid API call, retrieve relevant information, and still fail
its task because it omitted a later constraint. A terminal success signal identifies
the failed trajectory, but does not explain which decisions should change.
Conversely, a rubric that rewards \emph{finding a candidate item} may systematically
overvalue that action when the agent has not checked the required size.
Both observations motivate local credit assignment, but a plausible explanation
of progress is not necessarily an accurate estimate of an action's contribution.

Group-based policy optimization offers a simple terminal-feedback baseline
\citep{shao2024deepseekmath,ahmadian2024back}.
Step-level grouping and semantic assessments provide additional structure
\citep{feng2025gigpo,gandhi2026draco}.
In an executable training sandbox, another source of information is available:
restore an intermediate state and observe fresh continuations
\citep{he2026branching}. Replaying every decision, however, can cost substantially
more than collecting the original trajectories. The question is therefore not
only how to predict credit, but \emph{which prediction errors deserve scarce
execution evidence}.

We study this question in finite-horizon environments with a trusted terminal
evaluator and reliable state restoration. \credo{} uses a frozen judge to score
visible transitions against an evolving set of semantic criteria. A small head
maps these scores to a credit prediction. A randomized subset of transitions
receives two-sided continuation estimates, and a model-assisted correction
removes the predictor's conditional bias in expectation.
The rubric is thus a prediction representation; it does not replace the
environment's definition of success.

The central design criterion follows from the variance of this corrected
estimator. Prediction error matters most where a decision has a large
policy-score norm and little replay coverage. This yields a held-out,
variance-weighted objective for adding, removing, or merging criteria.
The same decomposition yields a square-root allocation of replay probabilities
that balances residual uncertainty, score sensitivity, and continuation cost.
These are coupled design choices within one conditional variance objective,
rather than two independently chosen heuristics.

Our contributions are a concrete framework for this coupling, a precise account
of its estimator guarantees and implementation boundaries, and a falsifiable
evaluation protocol. The inverse-probability correction and allocation principle
build on established model-assisted estimation and variance-reduction ideas
\citep{horvitz1952generalization,sarndal1992model,gu2017qprop,grathwohl2018backpropagation};
we do not claim those statistical identities as new.
This version includes an exact finite-model check, but no completed comparison
showing that evolving rubrics outperform ordinary predictors or additional
original rollouts at equal cost.

\section{Related work}
\paragraph{Outcome-based and grouped optimization.}
REINFORCE estimates policy gradients from returns \citep{williams1992simple}.
RLOO uses other independently sampled responses as a baseline
\citep{ahmadian2024back}; GRPO additionally uses within-group normalization
\citep{shao2024deepseekmath}. GiGPO forms groups at repeated intermediate
states to obtain local comparisons without additional branching
\citep{feng2025gigpo}. Evidence-calibrated optimization studies the reliability
of sparse local comparisons \citep{li2026evidence}.
\credo{} actively acquires new continuation evidence when the required simulator
operations are available. Its terminal component uses unstandardized RLOO;
it should not be labeled numerically identical to normalized GRPO.

\paragraph{Rubrics and counterfactual continuations.}
DRACO uses dynamic rubrics and step citations for credit allocation in an
outcome-blind training regime \citep{gandhi2026draco}. Our setting requires a
terminal verifier and resettable state, so the methods have different information
and environment assumptions. A comparison must disclose whether the original
DRACO regime is retained or adapted.
BPO exploits sandbox branching \citep{he2026branching}, while CVT-RL considers
policy-conditioned counterfactual credit under explicitly specified interventions
\citep{meng2026policy}.
Our intervention keeps the realized transition on one side and resamples from
its preceding state on the other. It does not delete an action or establish
policy-independent causal necessity.

\paragraph{Model-assisted estimation and control variates.}
Known-probability sampling can correct a prediction by an inverse-probability
weighted residual \citep{horvitz1952generalization,sarndal1992model}.
Learned control variates can reduce policy-gradient variance
\citep{gu2017qprop,grathwohl2018backpropagation}.
Here the inclusion probabilities are designed and recorded, not estimated
propensities. The proposed research claim is that semantic feature evolution
and execution allocation can be usefully coordinated by the resulting
variance objective. Whether that representation is preferable to a conventional
text-conditioned predictor remains an empirical question.

\section{Setting and method}
\subsection{Terminal objectives and realized-transition credit}
Let $z$ denote a task, $h_t$ the history visible to the policy, and $\xi_t$ the
complete restorable state immediately before action $a_t$.
The augmented state includes the environment, history, relevant random state,
clock, and remaining interaction budget. The policy is
$\pi_\theta(a_t\mid h_t)$, and a trajectory receives a bounded terminal reward
$R(\tau)\in[0,1]$. We use binary verified success in the proposed agent experiments.
The undiscounted objective and action score are
\begin{equation}
 J(\theta)=\E_{z,\tau\sim\pi_\theta}[R(\tau)],
 \qquad g_t=\nabla_\theta\log\pi_\theta(a_t\mid h_t).
 \label{eq:objective}
\end{equation}
An action can be a variable-length code block. Its score is the sum of the
scores of its sampled action tokens, not the score of a re-tokenized
normalization of the code.

For a realized transition $x_t=(\xi_t,a_t,\xi_{t+1})$, define
\begin{equation}
 U_\theta(x_t)=\E_{\pi_\theta}[R\mid \xi_{t+1}],
 \quad V_\theta(\xi_t)=\E_{\pi_\theta}[R\mid \xi_t],
 \quad C_\theta(x_t)=U_\theta(x_t)-V_\theta(\xi_t).
 \label{eq:credit}
\end{equation}
For a terminal post-state, $U_\theta$ is its verified reward. In stochastic
environments, $C_\theta$ conditions on the transition that actually occurred;
averaging over that transition yields the usual action advantage
$Q_\theta(\xi_t,a_t)-V_\theta(\xi_t)$. This distinction is needed for the
policy-gradient argument and for interpreting individual replay labels.
With terminal-only rewards, this target is an exact-value temporal-difference
increment on the augmented state, rather than a new definition of advantage.

\subsection{Two-sided replay}
Freeze the sampling policy $\pi_{\theta_k}$ for a complete batch.
At a selected transition, restore its post-state and draw $M$ fresh continuations
with rewards $R^+_1,\ldots,R^+_M$. Independently restore the pre-state for each of
$M$ baseline continuations, resample the first action from $\pi_{\theta_k}$,
and obtain $R^-_1,\ldots,R^-_M$. The baseline is allowed to sample the original
action again. Define
\begin{equation}
 \Delta_t=\frac{1}{M}\sum_{m=1}^M R^+_m-
                 \frac{1}{M}\sum_{m=1}^M R^-_m.
 \label{eq:replay}
\end{equation}
Both sides use the original remaining horizon: the post-state has one fewer
available action than the pre-state. Neither side reuses the original
trajectory's terminal reward as a fresh continuation sample.
For independent binary returns,
\begin{equation}
 \E[\Delta_t\mid x_t]=C_t,\qquad
 \sigma_t^2:=\Var(\Delta_t\mid x_t)
   =\frac{U_t(1-U_t)+V_t(1-V_t)}{M}.
 \label{eq:replaynoise}
\end{equation}
The terminal actual side has zero variance and need not execute additional
actions. State reconstruction and any retries remain part of replay cost.

\subsection{Rubric prediction and randomized correction}
A rubric $\mathcal R_k$ contains $K$ criteria. A frozen judge evaluates all
criteria using only the task, visible prefix, action, and observed feedback.
It does not receive hidden simulator state, hidden tests, or replay outcomes.
Let $s^-_t,s^+_t\in\mathbb R^K$ be pre- and post-action scores. We construct
\begin{equation}
 \phi_{\mathcal R_k}(x_t)=[s^-_t;\ s^+_t-s^-_t],
 \qquad c_t=b_k+w_k^\top\phi_{\mathcal R_k}(x_t).
 \label{eq:features}
\end{equation}
The linear head is unconstrained: neither the predictor nor its learned feature
weights is required to be a potential difference. More expressive heads are
compatible with the estimator, but need their own cost and capacity comparisons.

Before observing a fresh replay label, assign a known probability
$p_t\in[p_{\min},1]$ and independently draw $S_t\sim\operatorname{Bernoulli}(p_t)$.
Only selected positions are replayed. The corrected credit is
\begin{equation}
 \widehat C_t=c_t+\frac{S_t}{p_t}(\Delta_t-c_t).
 \label{eq:correction}
\end{equation}
The second term is evaluated only when $S_t=1$.
Rubric, head, allocation rule, and policy version are frozen for the batch.
They may depend on earlier training data, but not on the fresh replay randomness
they are about to correct. The predictor can be inaccurate; accuracy changes
variance rather than the conditional expectation.

\begin{figure}[t]
 \centering
 \includegraphics[width=\linewidth]{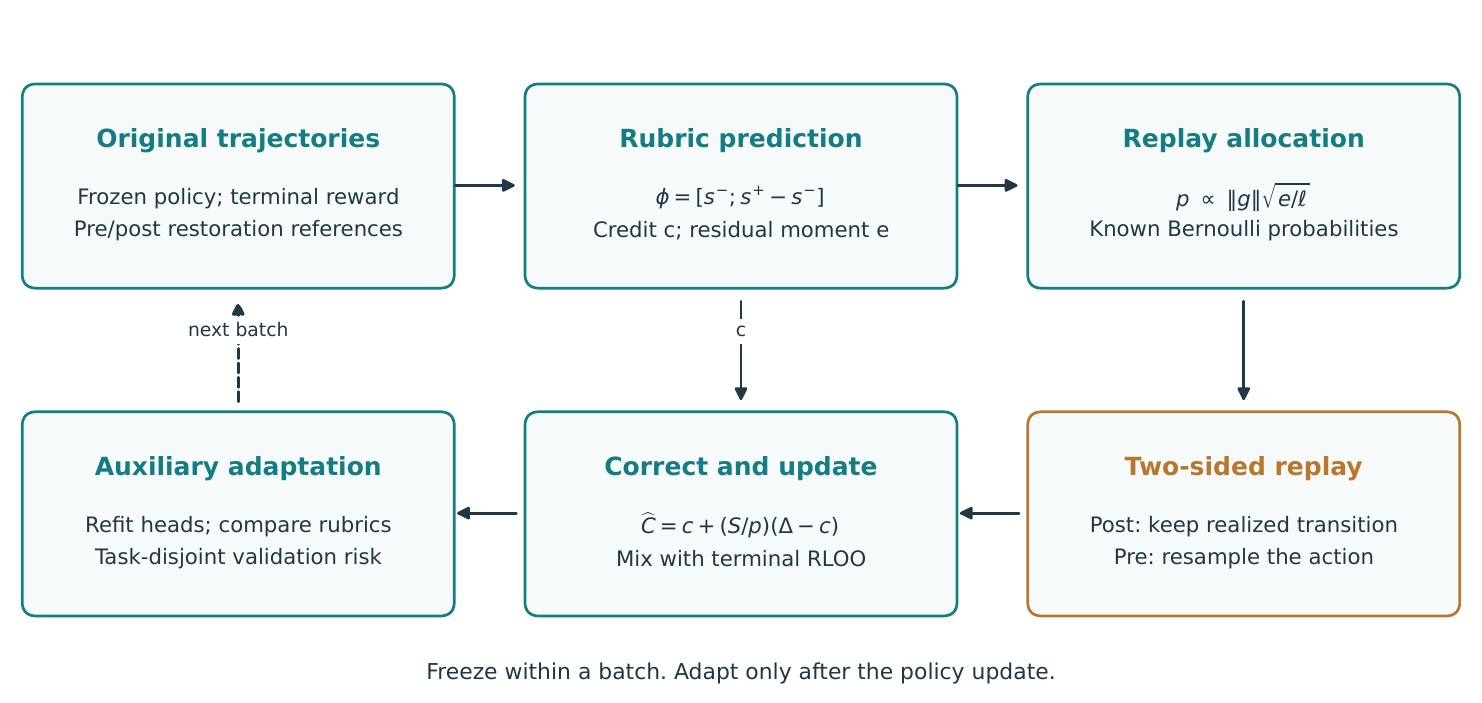}
 \caption{One frozen batch. Rubric features predict every original transition.
 Randomly selected transitions receive independent two-sided replay, correcting
 only their prediction residuals. After the policy update, replay data inform
 the next credit/error heads and held-out rubric selection. Replay continuations
 supply labels; they are not silently added to the original policy-update batch.}
 \label{fig:method}
\end{figure}

\subsection{A practical terminal-credit mixture}
For each task, collect $G>1$ independent original trajectories. The terminal
leave-one-out advantage is
\begin{equation}
 A_i^{\mathrm{out}}
 =R_i-\frac{1}{G-1}\sum_{j\ne i}R_j,
 \qquad
 A_{i,t}^{\mathrm{mix}}
 =(1-\alpha)A_i^{\mathrm{out}}+\alpha\widehat C_{i,t}.
 \label{eq:mixture}
\end{equation}
Only other original trajectories enter this baseline. Replay branches are kept
separate. We choose a batch-independent constant $\alpha\in[0,1]$; a starting
configuration is $\alpha=0.5$. The endpoints isolate the terminal and corrected
credit signals. Both components retain the terminal reward's original scale,
without separate group standardization. At $\alpha=0$, rubric scoring and
replay can be disabled entirely.

For the practical PPO variant \citep{schulman2017proximal}, broadcast the
stopped-gradient action advantage to its sampled tokens and use
\begin{equation}
 \rho_{i,t,u}(\theta)
 =\frac{\pi_\theta(y_{i,t,u}\mid h_{i,t},y_{i,t,<u})}
        {\pi_{\theta_k}(y_{i,t,u}\mid h_{i,t},y_{i,t,<u})}.
\end{equation}
The clipped loss is the masked aggregation of
$-\min\{\rho A^{\mathrm{mix}},
\clip(\rho,1-\epsilon,1+\epsilon)A^{\mathrm{mix}}\}$, with the usual optional
KL regularizer to a fixed reference. Tool feedback, formatting delimiters
inserted by the runtime, and padding receive zero loss mask.
All labels, probabilities, and auxiliary predictions are detached.
The exact gradient result below concerns the unnormalized action-score
estimator at $\theta_k$; it does not assert that repeated clipped PPO updates
or per-trajectory token averaging remain unbiased for Eq.~\eqref{eq:objective}.

\subsection{Replay allocation}
Let $e_t$ estimate the second moment
\begin{equation}
 e_t\approx \E[(\Delta_t-c_t)^2\mid x_t]
       =\sigma_t^2+(C_t-c_t)^2,
 \label{eq:error}
\end{equation}
and let $\ell_t>0$ estimate the cost of \emph{both} replay sides, including
state reconstruction. The squared policy-score norm $\|g_t\|^2$ measures
local gradient sensitivity. For fixed predictor and calibrated inputs,
the variance objective gives the allocation
\begin{equation}
 p_t^\star=
 \clip_{[p_{\min},1]}
 \left(\sqrt{\frac{\|g_t\|^2e_t}{\lambda\ell_t}}\right),
 \qquad \sum_t p_t^\star\ell_t\le B.
 \label{eq:allocation}
\end{equation}
A scalar search selects $\lambda$ for the desired expected cost.
$B$ is the budget available for replay after other costs are accounted for;
this allocation does not jointly optimize judge or rubric-search overhead.
A target average inclusion rate is a different constraint and must not be
reported as equal compute when continuation costs vary.
Independent Bernoulli sampling does not enforce an exact realized budget:
discarding selected jobs after sampling invalidates the recorded probabilities.

Exact full-parameter score norms are expensive for large models. Approximations
may be substituted as \emph{allocation heuristics}, with their definition and
cost reported. Known positive inclusion probabilities still permit the
correction in Eq.~\eqref{eq:correction}; oracle allocation optimality does not
transfer automatically to approximate norms or learned error/cost models.
Approximating the allocation norm does not replace the policy score used
in the gradient estimator.

\subsection{Learning the heads and evolving the rubric}
Archive each acquired label with its original policy version, old prediction,
inclusion probability $q_t$, and feature context.
When the only acquisition stream is the correction sampler, $q_t=p_t$.
Let $p_t^{\mathrm{ref}}$ be a fixed reference allocation shared by all candidates
in a rubric comparison. Define
\begin{equation}
 w_t^{\mathrm{var}}=\|g_t\|^2\left(\frac{1}{p_t^{\mathrm{ref}}}-1\right),
 \qquad
 \widehat{\mathcal L}_{\mathcal R}
 =\sum_{t\in\mathcal A}\frac{w_t^{\mathrm{var}}}{q_t}
       \bigl(\Delta_t-c_{\mathcal R}(x_t)\bigr)^2 .
 \label{eq:rubricrisk}
\end{equation}
Fit each candidate head on fitting tasks, with a ridge penalty on its weights.
Then compare candidates on the \emph{same held-out tasks and replay labels}.
Candidate proposals may add, remove, or merge criteria using large residuals
from the fitting partition only. Neither the held-out labels nor official
benchmark test tasks may guide candidate generation.
Both auxiliary partitions are drawn from training tasks; official benchmark
Dev tasks are reserved for experiment-level tuning and checkpoint selection.
Accept a candidate only if its held-out risk improves by a prespecified margin;
then update the allocation for the next batch.
A common weight normalization preserves candidate rankings, but a
self-normalized risk should not be described as a finite-sample unbiased
Horvitz--Thompson estimate.

The nonnegative error head is trained by inverse-inclusion weighted regression
on $(\Delta_t-c_t^{\mathrm{old}})^2$, using the prediction stored \emph{before}
fitting to that label. This target includes replay noise and does not multiply
in $\|g_t\|^2$ a second time. It avoids artificially small in-sample residuals,
but does not guarantee calibration after the credit head or policy changes.
Versioning, held-out error calibration, and optional cross-fitting are therefore
part of the evaluation, not assumptions that a learned head satisfies by design.

\begin{figure}[t]
\small
\fbox{\begin{minipage}{0.94\linewidth}
\textbf{Algorithm 1: One CREDO batch}
\begin{enumerate}[leftmargin=1.5em]
\item Freeze the policy, rubric, judge, credit/error heads, and cost model.
\item Collect $G$ original trajectories per task; record rewards, old token
      probabilities, masks, and pre/post restoration references.
\item Score visible transitions, predict $c_t,e_t$, and assign $p_t$ before
      drawing independent replay indicators.
\item Execute both replay sides for each selected position with fresh random
      streams and the original remaining horizon.
\item Form $\widehat C_t$ and $A^{\mathrm{mix}}_{i,t}$; finish the policy update
      on original trajectories only.
\item Update the auxiliary archive and heads. Propose rubric candidates from
      fitting tasks and compare them on a fixed, task-disjoint validation set.
\item Publish the accepted auxiliary versions for the next batch and checkpoint
      policy, optimizer, rubric, heads, and acquisition metadata together.
\end{enumerate}
\end{minipage}}
\caption{Policy optimization precedes auxiliary adaptation on the same labels.
Any replay retry retains its original sampling decision and records actual cost.}
\label{alg:credo}
\end{figure}

\section{Estimator properties and the shared design objective}
\label{sec:theory}
\begin{assumption}[Frozen, valid acquisition]
\label{ass:valid}
Condition on the original batch $\mathcal B$ and all previously fitted
auxiliary components. Scores, predictions, and probabilities are fixed.
For each position, $p_t>0$ is its true inclusion probability;
$S_t$ is independent of its fresh replay label; and restoration with the
remaining horizon produces $\E[\Delta_t\mid\mathcal B]=C_t$.
For the additive covariance formula, acquisition indicators and replay
randomness are also independent across positions.
\end{assumption}
Future random streams must follow the same conditional continuation law as
the target policy and environment. Resampling a persistent hidden task
variable, changing a tool response, or resetting the time budget would change
the target and violate the assumption.

\begin{proposition}[Conditional correction]
\label{prop:correction}
Under Assumption~\ref{ass:valid}, the estimator in
Eq.~\eqref{eq:correction} satisfies
$\E[\widehat C_t\mid\mathcal B]=C_t$ for every fixed predictor $c_t$.
\end{proposition}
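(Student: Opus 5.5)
The plan is to condition on $\mathcal B$ and on all previously fitted auxiliary components, as Assumption~\ref{ass:valid} allows. Then $c_t$ and $p_t$ are fixed numbers, and the only randomness left in Eq.~\eqref{eq:correction} is the pair $(S_t,\Delta_t)$. I will split the estimator as
\[
\widehat C_t=c_t\Bigl(1-\frac{S_t}{p_t}\Bigr)+\frac{S_t}{p_t}\Delta_t,
\]
and take the conditional expectation of each term.

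For the first term, $S_t\sim\operatorname{Bernoulli}(p_t)$ with $p_t>0$ the true inclusion probability, so $\E[S_t/p_t\mid\mathcal B]=1$. Since $c_t$ is fixed given $\mathcal B$, the first term therefore has conditional mean zero. This step is where the requirement that $c_t$ be frozen before the fresh replay randomness is used: if $c_t$ depended on $\Delta_t$ or on $S_t$, it could not be pulled out of the expectation.

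For the second term, the assumed independence of $S_t$ and the fresh label $\Delta_t$ given $\mathcal B$ lets the expectation factor as
\[
\E\Bigl[\tfrac{S_t}{p_t}\Delta_t\Bigm|\mathcal B\Bigr]=\E\Bigl[\tfrac{S_t}{p_t}\Bigm|\mathcal B\Bigr]\,\E[\Delta_t\mid\mathcal B]=1\cdot C_t .
\]
The last equality is the restoration clause of Assumption~\ref{ass:valid}, which matches Eq.~\eqref{eq:replaynoise}. Adding the two terms gives $\E[\widehat C_t\mid\mathcal B]=C_t$. No property of the predictor was used other than its being fixed under the conditioning, so the result holds for every fixed $c_t$.

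The main subtlety is that $\Delta_t$ is evaluated only when $S_t=1$, so an unconditional factorization can look ill-defined. I would handle this by first defining $\Delta_t$ on the whole probability space as a potential replay outcome drawn from the same continuation law. The estimator then equals $(S_t/p_t)\Delta_t$ as a product, and this matches the executed quantity because the product is zero whenever $S_t=0$. Independence of $S_t$ from $\Delta_t$ makes this construction legitimate. Integrability is automatic because rewards lie in $[0,1]$ and $p_t\ge p_{\min}>0$, so $|\widehat C_t|\le |c_t|+2(1+|c_t|)/p_{\min}$. Cross-position independence is not needed here; it matters only for the covariance formula.
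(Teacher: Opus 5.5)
Your proof is correct and matches the paper's argument: linearity, $\E[S_t\mid\mathcal B]=p_t$, independence of $S_t$ from the fresh label, and $\E[\Delta_t\mid\mathcal B]=C_t$ together give $c_t+p_t(C_t-c_t)/p_t=C_t$. Splitting the estimator as $c_t(1-S_t/p_t)+(S_t/p_t)\Delta_t$ is only a rearrangement of the paper's form, and your potential-outcome construction and integrability bound make explicit details that the paper leaves implicit.
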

\begin{proof}
Independence and $\E[S_t\mid\mathcal B]=p_t$ give
$\E[\widehat C_t\mid\mathcal B]
=c_t+p_t(C_t-c_t)/p_t=C_t$.
\end{proof}
This is a design-based statement. It does not require a correct rubric or error
model, but it does require the correct sampling probability and valid replay.

\begin{proposition}[Terminal-objective gradient]
\label{prop:gradient}
Suppose trajectories have finite horizon, differentiation can pass through
expectation, the environment dynamics are independent of $\theta$, and
$g_t$ is the score in Eq.~\eqref{eq:objective}.
Under Assumption~\ref{ass:valid},
\begin{equation}
 \E\!\left[\sum_t g_t\widehat C_t\right]=\nabla_\theta J(\theta).
 \label{eq:unbiasedgradient}
\end{equation}
For $G>1$ independent original trajectories per task and constant $\alpha$,
$G^{-1}\sum_{i,t}g_{i,t}A^{\mathrm{mix}}_{i,t}$ has the same expectation.
\end{proposition}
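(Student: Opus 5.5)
The plan is to reduce the corrected estimator to the realized-transition credit with the tower property, then use standard likelihood-ratio policy-gradient identities.

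\emph{Step 1 (replace $\widehat C_t$ by $C_t$).} By Proposition~\ref{prop:correction},
\[
\E[\widehat C_t\mid\mathcal B]=C_t .
\]
Since $g_t$ is a function of the original batch $\mathcal B$, this gives
\[
\E\!\left[\sum_t g_t\widehat C_t\right]=\E\!\left[\sum_t g_t C_t\right].
\]
The expectation on the right is over original trajectories only. Finiteness of the horizon, together with bounded rewards and hence bounded $C_t$, lets us exchange the finite sum with conditional expectation. Integrability of $\|g_t\|$ is implicit in the differentiation-through-expectation hypothesis.

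\emph{Step 2 (telescoping to the REINFORCE form).} Write
\[
C_t = U_\theta(x_t)-V_\theta(\xi_t)=V_\theta(\xi_{t+1})-V_\theta(\xi_t),
\]
identifying $U_\theta(x_t)$ with the value of the post-state $\xi_{t+1}$. This uses the augmented-state Markov property: $\xi_{t+1}$ contains history, clock and remaining budget. At a terminal post-state the value is the verified reward.

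I will show $\E[g_t C_t]=\E[g_t(R-V_\theta(\xi_t))]$, in two parts.
\begin{enumerate}
\item \emph{Future-reward term.} By the tower property,
\[
\E[g_t R\mid \xi_t,a_t,\xi_{t+1}]=g_t\,U_\theta(x_t),
\]
because $g_t$ is measurable with respect to $(h_t,a_t)$, and future rewards depend on the past only through $\xi_{t+1}$.
\item \emph{Baseline term.} We have
\[
\E[g_t V_\theta(\xi_t)]=\E\!\left[V_\theta(\xi_t)\,\E[g_t\mid\xi_t]\right]=0,
\]
since $\E_{a\sim\pi_\theta(\cdot\mid h_t)}[\nabla\log\pi_\theta(a\mid h_t)]=0$ and $h_t$ is a function of $\xi_t$.
\end{enumerate}
The same score-function zero-mean argument shows that $\E[g_t R]$ summed over $t$ is the likelihood-ratio gradient. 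Writing the trajectory density as the product of policy factors and $\theta$-independent dynamics, we get
\[
\nabla J=\E\!\left[R\sum_t g_t\right].
\]
This is where finite horizon, $\theta$-independent dynamics, and the interchange of derivative and integral enter. The variable-length code actions pose no issue, because $g_t$ is defined as the sum of the sampled-token scores, which is exactly the log-density of the sampled action.

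\emph{Step 3 (mixture).} Linearity splits $G^{-1}\sum_{i,t}g_{i,t}A^{\mathrm{mix}}_{i,t}$ into a $(1-\alpha)$ part and an $\alpha$ part; $\alpha$ is a constant, independent of the batch.
\begin{enumerate}
\item \emph{Corrected-credit part.} Each trajectory $i$ contributes $\nabla J$ by Steps 1--2, so the average over $G$ trajectories is $\nabla J$.
\item \emph{Terminal part.} We need
\[
\E\!\left[\sum_t g_{i,t}A^{\mathrm{out}}_i\right]=\nabla J .
\]
The term $R_i\sum_t g_{i,t}$ gives $\nabla J$ by Step 2. The leave-one-out mean $\frac1{G-1}\sum_{j\ne i}R_j$ is independent of trajectory $i$ given the task $z$. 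So, conditioning on $z$, its contribution factorizes as a product of expectations, and it vanishes because $\E[\sum_t g_{i,t}\mid z]=0$.
\end{enumerate}
Combining the two parts, the mixture has expectation $(1-\alpha)\nabla J+\alpha\nabla J=\nabla J$. Averaging over $i$ with the factor $G^{-1}$ preserves this.

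\emph{Main obstacle.} The delicate step is the measurability bookkeeping in Step 2. The identity $U_\theta(x_t)=\E[R\mid \text{full past up to }\xi_{t+1}]$ must hold, and $g_t$ must depend only on information contained in $\xi_t$ and $a_t$. This requires the augmented state to be genuinely Markov, including hidden task variables and random streams. That is precisely what Assumption~\ref{ass:valid} and the remark following it guarantee, and I would state this explicitly. I would also point out that the replay randomness is only averaged out conditionally on $\mathcal B$. The policy used for replay is $\pi_{\theta_k}$, so the result holds at $\theta=\theta_k$.
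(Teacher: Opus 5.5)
Your proposal is correct and follows essentially the same route as the paper's proof. You reduce $\widehat C_t$ to $C_t$ via Proposition~\ref{prop:correction} and the tower property, apply the likelihood-ratio identity, use the realized-transition step $\E[g_tU_\theta(x_t)]=\E[g_tR]$ and the zero-mean score for the pre-action baseline, treat the leave-one-out term by conditional independence of the other trajectories, and conclude by linearity in the constant $\alpha$; your remark that the identity holds at $\theta=\theta_k$ because replays are drawn from $\pi_{\theta_k}$ matches the paper's own caveat.
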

The proof in Appendix~\ref{app:proofs} uses both conditioning on the realized
transition and cancellation of a pre-action baseline. It does not require
deterministic transitions. A fixed linear mixture preserves the ideal
gradient expectation; this alone does not show that it reduces variance.

\begin{proposition}[Conditional replay variance]
\label{prop:variance}
Write $r_t=C_t-c_t$ and $\sigma_t^2=\Var(\Delta_t\mid\mathcal B)$. Then
\begin{equation}
 \Var(\widehat C_t\mid\mathcal B)
 =\frac{\sigma_t^2}{p_t}
  +\left(\frac{1}{p_t}-1\right)r_t^2.
 \label{eq:variance}
\end{equation}
For the averaged, unnormalized mixed gradient $\widehat G$, independent
acquisition across positions implies
\begin{equation}
 \tr\Cov(\widehat G\mid\mathcal B)
 =\frac{\alpha^2}{G^2}\sum_{i,t}\|g_{i,t}\|^2
 \left[\frac{\sigma_{i,t}^2}{p_{i,t}}+
       \left(\frac{1}{p_{i,t}}-1\right)r_{i,t}^2\right].
 \label{eq:gradientvariance}
\end{equation}
\end{proposition}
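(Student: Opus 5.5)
The plan is to prove Eq.~\eqref{eq:variance} with the law of total variance, conditioning on the fresh replay label. Then we sum across positions to get Eq.~\eqref{eq:gradientvariance}.

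\emph{Single-position variance.} Conditional on $\mathcal B$, write $\widehat C_t-c_t=(S_t/p_t)(\Delta_t-c_t)$. By Assumption~\ref{ass:valid}, $S_t$ is independent of $\Delta_t$, and $S_t^2=S_t$. This gives the second moment
\[
\E[(\widehat C_t-c_t)^2\mid\mathcal B]=\frac{1}{p_t}\E[(\Delta_t-c_t)^2\mid\mathcal B]=\frac{\sigma_t^2+r_t^2}{p_t},
\]
where the last step is the bias--variance split $\E[(\Delta_t-c_t)^2\mid\mathcal B]=\sigma_t^2+(C_t-c_t)^2$, as in Eq.~\eqref{eq:error}. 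Proposition~\ref{prop:correction} gives the mean $\E[\widehat C_t-c_t\mid\mathcal B]=r_t$. Subtracting $r_t^2$ yields $\sigma_t^2/p_t+(1/p_t-1)r_t^2$, which is Eq.~\eqref{eq:variance}.

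\emph{Gradient trace.} Write $\widehat G=G^{-1}\sum_{i,t}g_{i,t}A^{\mathrm{mix}}_{i,t}$. Conditional on $\mathcal B$, the scores $g_{i,t}$ and the terminal advantages $A_i^{\mathrm{out}}$ are fixed, since they are functions of the original trajectories. The only randomness is therefore in $\alpha\widehat C_{i,t}$, so $\widehat G-\E[\widehat G\mid\mathcal B]=(\alpha/G)\sum_{i,t}g_{i,t}(\widehat C_{i,t}-C_{i,t})$. Then
\[
\tr\Cov(\widehat G\mid\mathcal B)=\frac{\alpha^2}{G^2}\sum_{(i,t),(j,s)} g_{i,t}^\top g_{j,s}\Cov(\widehat C_{i,t},\widehat C_{j,s}\mid\mathcal B).
\]
The cross-position independence clause of Assumption~\ref{ass:valid} makes the off-diagonal covariances vanish, because each $\widehat C$ depends only on its own $(S,\Delta)$ pair. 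The diagonal terms give $\|g_{i,t}\|^2\Var(\widehat C_{i,t}\mid\mathcal B)$, and substituting Eq.~\eqref{eq:variance} finishes the proof.

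\emph{Main obstacle.} The delicate point is interpretive rather than computational. We must state precisely that $\sigma_t^2$ and the whole variance are conditional on $\mathcal B$ and on the frozen auxiliaries, so that $g$, $p$, $c$, and $A^{\mathrm{out}}$ are constants. We must also confirm that independence across positions covers replays from different transitions of the same trajectory. Each replay uses fresh random streams, even though the restored states are correlated through the shared original trajectory. Since those states are fixed given $\mathcal B$, the labels are conditionally independent, and I would make this explicit in the proof.
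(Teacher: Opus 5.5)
Your proposal is correct and matches the paper's proof in substance. It uses the same second-moment computation, relying on independence of $S_t$ and $\Delta_t$ and on $S_t^2=S_t$; you center at $c_t$ first, whereas the paper computes $\E[\widehat C_t^2\mid\mathcal B]$ and subtracts $C_t^2$. It then uses $\tr\Cov(\widehat G\mid\mathcal B)=\frac{\alpha^2}{G^2}\sum_{i,t}\|g_{i,t}\|^2\Var(\widehat C_{i,t}\mid\mathcal B)$, with cross terms removed by conditional independence and the terminal part constant given $\mathcal B$. One cosmetic point: your opening line announces the law of total variance, but the argument you actually give is the direct moment computation, and that computation is all the proof needs.
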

Original trajectory positions can be correlated: they are fixed in this
conditional statement. Their contribution reappears in
$\Cov(\E[\widehat G\mid\mathcal B])$ under the law of total covariance.
Shared baseline rollouts or coupled replay random numbers introduce additional
cross-covariance terms and are outside Eq.~\eqref{eq:gradientvariance}.

At $p_t=1$, the predictor has no effect on conditional replay variance.
At smaller $p_t$, its squared error is weighted by missing coverage
$(1/p_t-1)$. Thus, for fixed reference probabilities, minimizing
Eq.~\eqref{eq:rubricrisk} targets the predictor-dependent part of this variance:
\begin{equation}
 \E[(\Delta_t-c_{\mathcal R}(x_t))^2\mid x_t]
 =\sigma_t^2+(C_t-c_{\mathcal R}(x_t))^2.
\end{equation}
The noise term is common across candidates when policy, label construction,
and evaluation points are fixed. This identity justifies the comparison
criterion, not a claim that finite held-out selection always improves the
population risk.

\begin{proposition}[Oracle allocation at fixed expected cost]
\label{prop:allocation}
For fixed $c_t$, exact $e_t=\sigma_t^2+r_t^2$, and positive fixed costs $\ell_t$,
minimizing Eq.~\eqref{eq:gradientvariance} subject to
$p_{\min}\le p_t\le1$ and $\sum_t p_t\ell_t\le B$ gives
Eq.~\eqref{eq:allocation} on nonzero-benefit positions, with the usual
boundary cases for a slack budget. Feasibility requires
$B\ge p_{\min}\sum_t\ell_t$.
\end{proposition}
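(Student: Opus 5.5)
Starting from Proposition~\ref{prop:variance}, I would treat the allocation as a separable convex program. With $c_t$ fixed, $\sigma_t^2/p_t+(1/p_t-1)r_t^2=e_t/p_t-r_t^2$ where $e_t=\sigma_t^2+r_t^2$. Multiplying by $\|g_t\|^2$ and summing, Eq.~\eqref{eq:gradientvariance} equals $\frac{\alpha^2}{G^2}\bigl[\sum_t a_t/p_t-\sum_t\|g_t\|^2r_t^2\bigr]$ with $a_t:=\|g_t\|^2e_t\ge0$. The second sum does not depend on $p$, and the prefactor $\alpha^2/G^2$ is a positive constant when $\alpha>0$; when $\alpha=0$ every feasible $p$ is optimal, and I would exclude that case. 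So the problem reduces to minimizing $F(p)=\sum_t a_t/p_t$ over the box $[p_{\min},1]^n$ subject to the linear constraint $\sum_t p_t\ell_t\le B$.

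Next I would settle feasibility and existence. The feasible set is nonempty if and only if $p_t=p_{\min}$ for all $t$ satisfies the budget, which is exactly $B\ge p_{\min}\sum_t\ell_t$. The set is compact and convex. Each term $a_t/p_t$ is convex on $p_t>0$, and strictly convex when $a_t>0$, so a minimizer exists. Because the constraints are affine, Slater-type regularity is not needed and the KKT conditions are necessary and sufficient. I would call positions with $a_t>0$ nonzero-benefit positions. For $a_t=0$ the objective is flat in $p_t$, so any $p_t$ in the box is optimal there; taking $p_{\min}$ frees budget and is the natural choice, and this is why the formula is only asserted on nonzero-benefit positions.

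The core step is the Lagrangian $\sum_t a_t/p_t+\lambda(\sum_t p_t\ell_t-B)$ with $\lambda\ge0$ and the box constraints handled coordinatewise. For each coordinate this is a one-dimensional convex problem: minimize $a_t/p+\lambda\ell_t p$ over $[p_{\min},1]$. For $\lambda>0$ the unconstrained stationary point is $p=\sqrt{a_t/(\lambda\ell_t)}$. Since the function is unimodal, the constrained minimizer is its projection onto the interval. That projection is exactly $\clip_{[p_{\min},1]}\sqrt{\|g_t\|^2e_t/(\lambda\ell_t)}$, as in Eq.~\eqref{eq:allocation}.

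Complementary slackness then picks $\lambda$. The map $\lambda\mapsto\sum_t p_t(\lambda)\ell_t$ is continuous and nonincreasing. It tends to $p_{\min}\sum_t\ell_t\le B$ as $\lambda\to\infty$ and to the all-ones cost as $\lambda\to0^+$, so there are two cases. If $\sum_t\ell_t\le B$, the budget is slack, $\lambda=0$, and every nonzero-benefit $p_t=1$; these are the ``usual boundary cases.'' Otherwise, by the intermediate value theorem there is a $\lambda>0$ with equality, which justifies the scalar search mentioned after Eq.~\eqref{eq:allocation}.

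I expect the main obstacle to be the care at the boundaries rather than the calculus. The delicate points are verifying that coordinatewise clipping really satisfies the full KKT system, with multipliers for the box constraints of the correct sign, and handling the flat directions $a_t=0$ and non-uniqueness of $\lambda$ when several coordinates are clipped. I would handle the clipping by checking the sign of the derivative $-a_t/p^2+\lambda\ell_t$ at each endpoint. I would get uniqueness of $p_t$ on nonzero-benefit positions from strict convexity.
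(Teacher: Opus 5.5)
Your argument follows the same route as the paper's appendix proof. You reduce to $\sum_t a_t/p_t$ with $a_t=\|g_t\|^2e_t$, use convexity and KKT to get $a_t/p_t^2=\lambda\ell_t$ with clipping, and use the monotone expected cost in $\lambda$ to fix the multiplier. Your feasibility, existence, strict-convexity uniqueness and intermediate-value steps make explicit what the paper leaves terse. The box-multiplier bookkeeping you worry about can be skipped with Lagrangian sufficiency. For feasible $p$ and your Lagrangian $L$, $F(p)\ge L(p,\lambda)\ge L(p(\lambda),\lambda)=F(p(\lambda))$ by complementary slackness.

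One step needs repair. For $\lambda>0$, a zero-benefit position has $p_t(\lambda)=\clip_{[p_{\min},1]}(0)=p_{\min}$. So as $\lambda\to0^+$ the cost tends to $\sum_{a_t>0}\ell_t+p_{\min}\sum_{a_t=0}\ell_t$, not to the all-ones cost $\sum_t\ell_t$. Suppose some $a_t=0$ and $B$ lies strictly between these two numbers. Then no $\lambda>0$ attains equality, so your intermediate-value step fails. The correct solution in that range is the slack one: $\lambda=0$, every beneficial position at $1$. This is the paper's remark that ``excess budget need not be spent once all beneficial positions saturate.'' Your slack threshold should therefore be the cost of saturating only the beneficial positions.

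Relatedly, your claim that any $p_t$ in the box is optimal at a zero-benefit position holds only in that saturated regime. At an optimum with some beneficial $p_s<1$, moving budget from a zero-benefit position above the floor to $s$ strictly lowers $\sum_t a_t/p_t$. Those positions must therefore sit at $p_{\min}$, which agrees with the clipped formula.
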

The objective reduces to $\sum_t\|g_t\|^2 e_t/p_t$ plus a constant.
Its interior derivative is
$-\|g_t\|^2 e_t/p_t^2+\lambda\ell_t$, yielding the square-root rule.
This is conditional allocation optimality, not optimal policy learning or a
guarantee about estimated costs.

\subsection{Exact finite-model check}
We enumerate a two-decision stochastic model with 16 original trajectories,
all Bernoulli inclusion decisions, and all two-sided replay outcomes for $M=2$.
The scalar objective gradient is $0.1127514540$. Table~\ref{tab:audit} is
generated by the accompanying Python script, not fitted to observed LLM results.
The no-correction controls retain the same expected replay budget but ignore
the labels in their update; oracle variants are diagnostic constructions.

\begin{table}[t]
\centering\small
\begin{tabular}{lrrrr}
\toprule
Estimator & $\E[\widehat G]$ & $|\mathrm{bias}|$ & $\Var(\widehat G)$ & Cost\\
\midrule
Zero + correction & 0.112751 & 0.000000 & 1.668019 & 0.80 \\
Reversed oracle + correction & 0.112751 & 0.000000 & 3.881736 & 0.80 \\
Oracle + correction & 0.112751 & 0.000000 & 0.930114 & 0.80 \\
Zero + oracle allocation & 0.112751 & 0.000000 & 1.319479 & 0.80 \\
Zero, no correction & 0.000000 & 0.112751 & 0.000000 & 0.80 \\
Reversed oracle, no correction & -0.112751 & 0.225503 & 0.073210 & 0.80 \\
\bottomrule
\end{tabular}

\caption{Exact finite-model audit. Cost is the expected number of additional
environment actions per original trajectory, excluding the common original
rollout. Corrected estimators match the analytic gradient to numerical precision;
bad predictions still increase variance. Oracle allocation reduces variance
relative to uniform allocation for the same zero predictor and expected cost.
These calculations do not establish the quality of a learned rubric or allocator.}
\label{tab:audit}
\end{table}

\section{Evaluation protocol and present evidence}
\label{sec:evaluation}
Full language-agent comparisons remain to be completed. This version reports no
\credo{} benchmark improvement and does not attribute results from a separate
outcome-only GRPO training run to \credo{}.
The finite-model calculation verifies algebra and implementation of the
estimator; it cannot establish practical sample efficiency.

\paragraph{Planned settings.}
The current experimental plan uses AppWorld as a primary resettable coding
environment \citep{trivedi2024appworld}, with a second conversational
tool-use setting based on $\tau^2$-Bench \citep{barres2025tau2}.
ALFWorld \citep{shridhar2020alfworld} is intended for lower-cost mechanism
experiments. For held-out generalization, we plan to evaluate the multi-turn
tasks of BFCL \citep{patil2025bfcl} using a pinned v4 release
\citep{mao2025bfclv4}. Multi-turn evaluation was introduced in BFCL v3;
the release and exact task subset will be reported separately from v4's
additional agentic categories. Only the AppWorld replay adapter is presently available;
the other integrations remain planned.
The principal model scale is Qwen3-8B, with Qwen3-4B as a secondary scale
\citep{yang2025qwen3}.
All compared methods must share initialization, prompt protocol, action unit,
interaction limits, terminal evaluator, and evaluation decoding.
Benchmark releases and task manifests will be fixed before comparison;
later versions of the $\tau$-Bench suite will not be silently substituted
for the cited $\tau^2$-Bench setting.

For AppWorld, train on the official Train split, tune on Dev, and reserve
Test-Normal and Test-Challenge for fixed final evaluation.
The initial shared AppWorld budget is 50 actions, a 32,768-token context,
and at most 2,048 generated tokens per action; any revised setting must be
fixed across methods before final evaluation.
Task Goal Completion (TGC) is the fraction of tasks passing the official
task evaluator. Scenario Goal Completion (SGC) requires all associated
tasks in a scenario to succeed. Action execution success and the fraction
of hidden checks passed are separate diagnostics, not SGC.

\paragraph{Comparisons needed to support the method.}
The principal controls are outcome-only RLOO and normalized GRPO; additional
original trajectories at matched total cost; a conventional text-conditioned
predictor with identical replay labels, correction, and allocation; and
compatible branching and rubric baselines.
The representation claim requires beating the matched conventional predictor,
not merely an outcome-only method.
Compare fixed versus evolving rubrics and uniform versus adaptive replay in
a $2\times2$ design. Also compare variance-weighted rubric selection with
ordinary MSE and terminal-outcome discrimination.
The no-correction ablation still acquires the same labels to avoid confounding
the update rule with the supervision budget.

\paragraph{Mechanism and cost measurements.}
At fixed policy checkpoints, measure gradient bias against exact truth in small
models, and repeat acquisition to estimate conditional gradient covariance.
In real environments, high-budget replay is a noisy reference, not ground truth.
Measure the relative magnitudes of $\sigma_t^2$ and $(C_t-c_t)^2$, allocator
calibration, importance-weight tails, and predictor drift as policies change.
Sweep $M$ under a fixed \emph{total} budget rather than treating replay count as
a free accuracy knob.

Report original, replay, judge, candidate-scoring, and reconstruction tokens;
environment calls; and reserved GPU-hours separately. Equal selection rates
do not imply equal cost. Budget-matched success curves and the cost to reach
a fixed success level are the primary efficiency evidence.
Use multiple independent training seeds and task/scenario-aware uncertainty
estimates. Repeated continuations from one state are repeated measurements,
not additional independent tasks. With few seeds, interaction effects in the
$2\times2$ design should be treated as exploratory.

\section{Limitations and failure modes}
\paragraph{Restoration and evaluator dependence.}
The framework applies to controlled training sandboxes with a trustworthy
terminal reward. Database restoration alone is insufficient when execution
also depends on a Python namespace, user simulator, clock, or remaining budget.
It is not intended to replay irreversible operations against production users.
Invalid reconstruction biases the replay label; inverse-probability weighting
cannot repair that bias.

\paragraph{Variance and computational overhead.}
Unbiasedness does not imply low variance or low mean-squared error.
Small $p_t$ amplifies continuation noise as well as prediction error.
If $\sigma_t^2$ dominates, improving rubrics has little effect on total variance.
Under a fixed budget, increasing $M$ can reduce coverage, so the useful operating
point must be measured. Scoring every transition and computing score norms
may also outweigh any saved replay; all such costs belong in the comparison.
Clipping corrected credit values changes the expectation guarantee and must
be identified as a separate practical modification.

\paragraph{Adaptive selection and predictor drift.}
Sparse acquisition may miss a systematically wrong criterion. Positive floors
provide support but not rapid discovery. A separate uniform diagnostic stream
is a possible extension, with its true union probability recorded if labels
also contribute to correction. Repeated rubric selection on the same held-out
tasks can overfit them. Updated policies and heads can make error predictions
stale. None of these problems is removed by the conditional expectation identity.

\paragraph{Optimization and empirical scope.}
Token-length normalization, PPO and gradient clipping, multiple optimizer epochs, regularization,
and approximate score norms create a gap between the ideal estimator and the
practical update. We do not claim an unbiased full training procedure,
monotone learning, or superiority over a learned critic. Those claims would
require additional analysis or controlled empirical evidence.

\section{Conclusion}
\credo{} treats semantic rubrics as revisable credit predictors and execution
replay as a selective correction mechanism. A conditional variance criterion
links how rubric features are evaluated to how replay is allocated.
The estimator has a clear expectation guarantee under explicit restoration
and sampling assumptions, while its practical value depends on predictor
quality, continuation noise, and total cost. The next empirical test is whether
this coupling improves verified task performance over matched conventional
predictors and additional original sampling.

\section*{Use of generative AI tools}
OpenAI Codex assisted with drafting and revising the manuscript, preparing
the LaTeX source, and developing the finite-model audit code.
The numerical audit values were computed by the accompanying executable
Python program. The author retains responsibility for the scientific content,
references, and accompanying code.

{\small
\setlength{\bibsep}{3pt}
\bibliographystyle{plainnat}
\bibliography{references}
}
\clearpage
\appendix
\section{Proof details}
\label{app:proofs}
\subsection{Policy-gradient identity}
For a fixed task and frozen auxiliary history, the likelihood-ratio identity
gives $\nabla_\theta J=\E[\sum_t g_tR]$.
Because the realized augmented transition determines the conditional continuation
law and $g_t$ is measurable with respect to it,
\begin{equation}
 \E[g_tU_\theta(x_t)]=\E[g_tR].
\end{equation}
Since $\xi_t$ contains $h_t$ and precedes the action,
\begin{equation}
 \E[g_tV_\theta(\xi_t)]
 =\E\!\left[V_\theta(\xi_t)
       \sum_a\pi_\theta(a\mid h_t)\nabla_\theta\log\pi_\theta(a\mid h_t)\right]=0.
\end{equation}
Thus $\E[\sum_tg_tC_t]=\nabla_\theta J$. Proposition~\ref{prop:correction}
then gives Proposition~\ref{prop:gradient}.
For original trajectory $i$, its leave-one-out baseline depends on other,
conditionally independent trajectories. Conditioning on those trajectories
leaves the zero-score identity unchanged. Consequently
$\E[\sum_tg_{i,t}A_i^{\mathrm{out}}]=\nabla_\theta J$ as well.
Linearity proves the statement for a constant mixture.
Choosing a data-dependent per-transition mixture, dividing by random action
lengths, or evaluating the score under a different policy is not justified by
this argument.

\subsection{Variance and candidate risk}
Suppress the position index and write $D=\Delta-c$, so that
$\E[D]=r$ and $\E[D^2]=\sigma^2+r^2$. Independence and $S^2=S$ imply
\begin{equation}
 \E[\widehat C^2\mid\mathcal B]
 =c^2+2cr+\frac{\sigma^2+r^2}{p}.
\end{equation}
Subtracting $C^2=(c+r)^2$ proves Eq.~\eqref{eq:variance}.
For a fixed vector $g$,
$\Cov(g\widehat C\mid\mathcal B)=gg^\top\Var(\widehat C\mid\mathcal B)$.
Its trace is $\|g\|^2\Var(\widehat C\mid\mathcal B)$.
Conditionally independent replay errors yield the sum in
Eq.~\eqref{eq:gradientvariance}; the terminal part is constant given
$\mathcal B$.

For a finite held-out frame of positions, let $I_t$ indicate archive inclusion
with probability $q_t$. Conditional on fixed candidates fitted elsewhere,
\begin{equation}
 \E\!\left[\sum_t\frac{I_t}{q_t}w_t^{\mathrm{var}}
     (\Delta_t-c_{\mathcal R}(x_t))^2\right]
 =\sum_t w_t^{\mathrm{var}}\!
     \left[\sigma_t^2+(C_t-c_{\mathcal R}(x_t))^2\right].
\end{equation}
All candidates must share labels, $q_t$, score norms, and reference probabilities.
Dividing each empirical candidate score by the same observed total weight leaves
its ordering unchanged. It does not make that ratio an unbiased risk estimate.
Likewise, picking the smallest observed validation loss does not imply a
population improvement when the candidate set or validation reuse is large.

\subsection{Budget allocation}
Set $a_t=\|g_t\|^2e_t$. Ignoring constants independent of $p$, the objective is
$\sum_t a_t/p_t$. Its second derivative is $2a_t/p_t^3\ge0$.
The KKT conditions for positive $a_t$ and an active budget give
$a_t/p_t^2=\lambda\ell_t$ at interior positions, with clipping at the bounds.
The expected-cost sum decreases monotonically with $\lambda$, enabling bisection.
Zero-benefit positions may remain at the probability floor, and excess budget
need not be spent once all beneficial positions saturate.
Optimizing estimated quantities only approximately follows this oracle rule.

\section{Exact audit specification}
\label{app:audit}
The audit has two decisions, binary actions, one intermediate binary state
$s\in\{0,1\}$, and a Bernoulli terminal reward. The probability of action one is
$\operatorname{sigmoid}(\theta f)$ with $\theta=0.3$ and
$f_{\mathrm{start}}=1$, $f_0=-0.6$, $f_1=1.4$.
The intermediate state is one with probability $0.8$ after action one and
$0.2$ after action zero. Terminal success probabilities are
\begin{equation}
\begin{array}{c|cc}
 & a=0 & a=1\\ \hline
 s=0 &0.10&0.55\\
 s=1 &0.45&0.90
\end{array}.
\end{equation}
Enumerating $(a_0,s,a_1,R)$ produces 16 original paths and
$J(0.3)=0.5318009108$.
The action score is $f(a-\operatorname{sigmoid}(\theta f))$.
At each position, the two replay means follow scaled binomial distributions
with $M=2$. Uniform acquisition uses $p=0.1$ and the adaptive audit uses the
exact residual second moments, floor $0.02$, and the same expected action
budget $0.8$. Replay costs are six actions at the first position and two at
the terminal transition. State-copy and judge costs are absent in this
mathematical model and are not being estimated for a real system.

The script enumerates replay outcomes and selection indicators to compute
gradient moments, then independently checks the variance decomposition.
Its oracle and reversed-oracle predictors are deliberate diagnostics, not
learned models. For no-correction controls, acquired labels are ignored while
the same expected replay work is charged. The source is
\texttt{anc/verify\_estimator.py} in the source archive; numeric output is
\texttt{anc/exact\_audit.json}.

\section{Implementation contract and starting configuration}
An AppWorld action is one generated and executed Python code block, regardless
of how many API calls appear inside it. Restoration references must account
for databases, Python execution context, history, clocks, and remaining horizon.
Operational RNG metadata should support deterministic reconstruction checks;
fresh continuation streams must sample the target conditional dynamics without
resampling persistent task facts. Non-serializable execution objects can require
reconstructing the recorded prefix in a fresh process and checking state
fingerprints. Every reconstruction call belongs in the cost ledger.

A starting configuration uses $G=8$, $\alpha=0.5$, $M=2$ per replay side,
mean inclusion rate $10\%$, probability floor $2\%$, eight initial rubric
criteria, at most sixteen criteria, and up to four candidates per revision.
These are development starting points, not empirically established optima.
Budget-matched experiments must replace the inclusion-rate target by an
explicit expected-cost constraint.

For a frozen, state-scoring judge, cached post-state scores can be reused as
the next pre-state scores only when the rubric, judge version, and scoring
context match exactly. Such caching is an implementation option, not an
assumption in the cost claims.
An optional independent diagnostic acquisition stream with probability $d_t$
has union probability $q_t=1-(1-p_t)(1-d_t)$.
If union-acquired labels also enter the correction, the denominator must be
$q_t$, not the adaptive stream's $p_t$. If they are used only for auxiliary
learning, gradient correction retains its own acquisition law.

Finally, transient replay infrastructure failures must not be converted to
terminal reward zero or silently removed from the selected set.
Retain acquisition identities, retry under a documented protocol, and stop
the affected training update if a valid selected label cannot be obtained.
This requirement is distinct from a successfully executed continuation whose
official task reward is zero.

\end{document}